\documentclass{article}

\usepackage[preprint]{corl_2026} % arXiv preprint

\usepackage{amsmath,amssymb,bm,mathtools}
\usepackage{amsthm}
\usepackage{graphicx}
\usepackage{booktabs,multirow,array,tabularx}
\usepackage{siunitx}
\usepackage[table]{xcolor}
\usepackage{enumitem}
\usepackage[capitalize,nameinlink]{cleveref}
\crefname{figure}{Figure}{Figures}
\Crefname{figure}{Figure}{Figures}
\crefname{table}{Table}{Tables}
\Crefname{table}{Table}{Tables}
\crefname{section}{Section}{Sections}
\Crefname{section}{Section}{Sections}
\crefname{subsection}{Section}{Sections}
\Crefname{subsection}{Section}{Sections}
\crefname{appendix}{Appendix}{Appendices}
\Crefname{appendix}{Appendix}{Appendices}
\usepackage{microtype}
\usepackage{xspace}
\usepackage{wrapfig}
\usepackage{subcaption}

\newtheorem{theorem}{Theorem}
\newtheorem{proposition}{Proposition}
\newtheorem{definition}{Definition}

\newcommand{\PFDS}{\textsc{PFDS}\xspace}
\newcommand{\PFDSOFF}{\textsc{PFDS\_OFF}\xspace}
\newcommand{\DSHC}{\textsc{DSHC}\xspace}
\newcommand{\DSHCOFF}{\textsc{DSHC\_OFF}\xspace}
\newcommand{\HCDS}{\textsc{DSHC}\xspace}   % legacy alias

\newcommand{\BranchMLP}{\textsc{Branch-MLP}\xspace}
\newcommand{\FlatMLP}{\textsc{Flat-MLP}\xspace}
\newcommand{\TactSet}{\textsc{TactSet}\xspace}
\newcommand{\etal}{\emph{et al.}\xspace}
\newcommand{\RR}{\mathbb{R}}
\newcommand{\Gdiag}{G_{\mathrm{diag}}}
\newcommand{\Gframe}{G_{\mathrm{frame}}}
\newcommand{\SN}{\mathfrak{S}_N}
\newcommand{\EE}{\mathbb{E}}
\newcommand{\Pool}{\mathrm{Pool}}

\title{%
S2M-Trek: From Single to Multi-Sphere Transport\\
  via Per-Frame Deep Sets on a Wheel-Legged Robot%
}

\author{%
  \bfseries Zong Chen$^{1,\dagger}$ \quad Xuebin Li$^{2}$ \quad Jinpeng Xiao$^{1}$ \quad Shaoyang Li$^{1}$ \\
  \bfseries Ben Liu$^{1}$ \quad Min Li$^{1}$ \quad Zhouping Yin$^{1}$ \quad Yiqun Li$^{1,*}$ \\
  \normalfont $^{1}$School of Mechanical Science and Engineering, Huazhong University of Science and Technology \\
  $^{2}$School of Mathematics, Harbin Institute of Technology \\
  $^{*}$Corresponding author \qquad $^{\dagger}$\texttt{skelon\_chan@hust.edu.cn}%
}

\begin{document}
\maketitle

% unmarked footnote for funding acknowledgement
\newcommand{\blfootnote}[1]{%
  \begingroup
  \renewcommand\thefootnote{}\footnote{#1}%
  \addtocounter{footnote}{-1}%
  \endgroup
}
\blfootnote{This work was supported by the Fundamental and Interdisciplinary Disciplines Breakthrough Plan of the Ministry of Education of China (JYB2025XDXM208).}

% ── abstract ────────────────────────────────────────────────────────────────
\begin{abstract}
We study the problem of scaling dynamic loco-manipulation from a single free-rolling sphere to multiple spheres transported simultaneously on the back of a wheel-legged quadruped, without fences, grippers, or mechanical stops. Multiple identical free-rolling spheres form an unordered set with no persistent identity: their ordering may change independently at each history frame, creating a \emph{per-frame permutation symmetry} that standard history-concatenation set encoders do not explicitly enforce---these encoders impose only a shared, diagonal permutation symmetry over the full history. We show that this symmetry mismatch leads to a concrete failure mode in curriculum-based reinforcement learning. Within the same PPO training budget, flat MLPs and branch-wise encoders plateau at or below the two-sphere stage, while a history-concatenation Deep Sets baseline (\HCDS) fails to progress past the two-sphere stage in our runs unless ball-to-slot assignments are randomised during training, suggesting that it exploits slot indices as a curriculum shortcut rather than learning identity-free multi-sphere dynamics. We propose \textbf{Per-Frame Deep Sets (\PFDS)}, which performs permutation-invariant pooling within each history frame before temporal readout; we prove that \PFDS is $\Gframe$-invariant and universally approximates continuous $\Gframe$-invariant policies. A $2{\times}2$ ablation over encoder architecture and slot randomisation separates the architectural and data-augmentation pathways, and \PFDS reaches the five-sphere stage with 100\% no-drop transport in simulation across all five random seeds. We further distill the \PFDS teacher into \TactSet via DAgger, replacing privileged sphere-state observations with a $16{\times}16$ Boolean union contact map, yielding a compact and naturally $\Gframe$-invariant tactile representation.

\end{abstract}

\keywords{locomotion, loco-manipulation, permutation invariance, deep sets, teacher-student learning, tactile sensing, multi-object manipulation}

% ============================================================================
\section{Introduction}
\label{sec:intro}
% ============================================================================

\begin{figure}[t] \centering \includegraphics[width=0.94\linewidth]{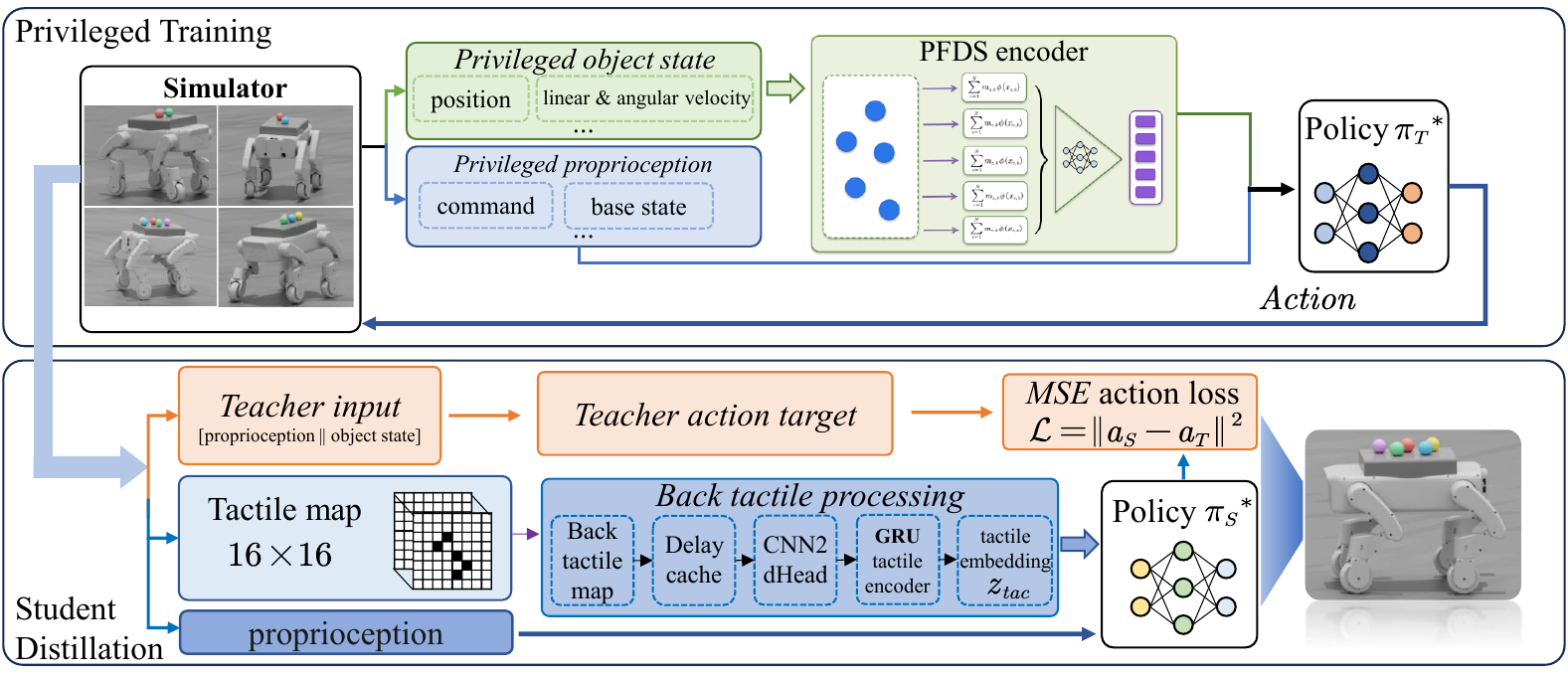} \caption{System overview of S2M-Trek (\textbf{top}: privileged teacher training; \textbf{bottom}: tactile-student distillation pipeline). The \PFDS teacher is trained in Isaac Lab~\citep{mittal2025isaaclab} with PPO using privileged object-state observations (position, linear/angular velocity). \TactSet is distilled via DAgger, replacing the privileged observation with a $16{\times}16$ Boolean union tactile contact map; physical deployment on the robot is in progress and outside the scope of this paper.} \label{fig:framework}
\end{figure}

Wheel-legged and legged robots have achieved remarkable locomotion versatility~\citep{hwangbo2019learning,rudin2022learning,miki2022robust,kumar2021rma}, and recent work has extended these platforms to single-object loco-manipulation~\citep{ji2023dribblebot,he2024visual,liu2024visual,lin2025locotouch}. Scaling from one object to many, however, is not a simple matter of adding more input channels. Transporting \emph{multiple} free-rolling objects simultaneously---without fences, clamps, or fixed identity labels---changes the representational problem: the objects form an unordered, identity-free set whose slot assignments may change independently from one history frame to the next, a symmetry that prior work has not formalised. This paper studies the problem entirely in simulation; real-robot deployment of the tactile student is in progress and will be reported separately.

\paragraph{The symmetry mismatch problem.} Consider the multi-sphere observation tensor $X \in \RR^{H \times N \times d}$, where $H$ is the history length, $N$ the maximum number of object slots, and $d$ the per-object feature dimension.  Because the spheres are identical, swapping two balls' slot assignments in any single history frame produces a physically equivalent state. The correct quotient group is therefore the \emph{per-frame} product group
\begin{equation}
  \label{eq:gframe}
  \Gframe = (\SN)^H, \qquad (g \cdot X)_{t,i} = x_{t,\pi_t^{-1}(i)},
  \quad g = (\pi_1, \dots, \pi_H),
\end{equation}
not the \emph{diagonal} subgroup $\Gdiag = \{(\pi,\dots,\pi) : \pi \in \SN\} \subsetneq \Gframe$ that history-concatenation encoders~\citep{zaheer2017deepsets,lee2019settransformer} actually satisfy.  The quotient ratio $|\Gframe|/|\Gdiag| = (N!)^{H-1}$ reaches $\approx 1.7{\times}10^6$ for $N{=}5$, $H{=}4$: six orders of magnitude of redundant non-physical variation that $\Gdiag$ encoders cannot collapse.

\paragraph{Two paths to $\Gframe$ robustness.} One can approach $\Gframe$ invariance in two ways: \begin{enumerate}[topsep=2pt,itemsep=0pt,leftmargin=*]
\item \textbf{Architectural (intrinsic):} Design the encoder so that $f(g \cdot X) = f(X)$ for all $g \in \Gframe$ by construction.
\item \textbf{Data-augmentation (learned):} Keep a $\Gdiag$ encoder but sample a fresh per-frame slot permutation $\pi_t$ at every simulation step, implicitly averaging the loss over the $\Gframe$ orbit.
\end{enumerate}
We show experimentally that these two paths are not interchangeable: the architectural path advances the curriculum in both training regimes, while the data-augmentation path fails to advance past the two-sphere stage when augmentation is absent.

\paragraph{Contributions.} \begin{enumerate}[topsep=2pt,itemsep=0pt,leftmargin=*]
\item We formalise multi-sphere transport as a $\Gframe$-invariant policy learning problem, deriving the quotient ratio $(N!)^{H-1}$ as a quantitative measure of the representational gap left unaddressed by standard encoders.
\item We propose \textbf{\PFDS} (Per-Frame Deep Sets): a minimal modification of standard Deep Sets that achieves $\Gframe$-invariance by performing frame-wise pooling before temporal readout, backed by proofs of invariance and universal approximation.
\item We present an eight-encoder comparison and a $2{\times}2$ ablation (encoder $\in\{\PFDS,\DSHC\}\times$ augmentation $\in\{\text{on},\text{off}\}$) that separates the architectural and data-augmentation contributions, and we further confirm \PFDS's reliability across five random seeds.
\item We propose \TactSet, a DAgger-distilled student that replaces privileged ball-state observations with a $16{\times}16$ Boolean union contact map---a representation that is \emph{naturally} $\Gframe$-invariant---achieving 75\% no-drop transport of five spheres in simulation.
\end{enumerate}

% ============================================================================
\section{Related Work}
\label{sec:related}
% ============================================================================

\paragraph{Legged loco-manipulation.} Rapid motor adaptation~\citep{kumar2021rma} and massively parallel RL~\citep{rudin2022learning} laid the foundation for agile locomotion. DribbleBot~\citep{ji2023dribblebot} and follow-up work~\citep{he2024visual,liu2024visual} extended quadrupeds to single-object manipulation, and LocoTouch~\citep{lin2025locotouch} added back-mounted tactile sensing for cargo transport. To our knowledge, no prior work addresses unconstrained multi-sphere transport or identifies per-frame permutation symmetry as a representational bottleneck for scaling loco-manipulation.

\paragraph{Permutation-invariant networks.} Deep Sets~\citep{zaheer2017deepsets} introduced the $\rho(\sum_i \phi(x_i))$ universal approximator; Set Transformer~\citep{lee2019settransformer} and PointNet~\citep{qi2017pointnet} extended it with attention and point-cloud structure, and Maron~\etal~\citep{maron2020sets} analysed higher-order tensor invariance. These methods all target \emph{single-frame} sets. We address the strictly harder \emph{multi-frame} case, where each history frame may be permuted independently---a product group $(N!)^{H-1}$ larger than the diagonal group these encoders satisfy.

\paragraph{Equivariant RL and multi-object symmetry.} MDP homomorphic networks~\citep{vanderpol2020mdp,vanderpol2021multiagent}, SE(3)-equivariant manipulation~\citep{zhu2026equact,hoang2025geometry}, and multi-agent symmetry methods~\citep{mcclellan2024boosting} exploit spatial or agent-level structure for sample-efficient RL. None considers the temporal product symmetry arising when identical objects are observed over multiple history frames with independent slot re-assignments.

\paragraph{Tactile sensing and distillation.} Tactile sensors support contact-rich manipulation and load transport on legged platforms~\citep{dahiya2010tactile,luo2017robotic,bauza2020tactile,lloyd2023tactile,lin2025locotouch}, and DAgger~\citep{ross2011dagger} distillation bridges privileged training observations and deployment-feasible sensors. A central obstacle in multi-object settings is that object-indexed teacher representations do not map cleanly to tactile signals; we observe that the Boolean \emph{union} of contact footprints is $\Gframe$-invariant by construction and therefore aligns student and teacher representations without explicit permutation bookkeeping.

% ============================================================================
\section{Problem Formulation}
\label{sec:problem}
% ============================================================================

\paragraph{Multi-sphere transport MDP.} A four-wheel-independent-steering quadruped---self-designed for this study---transports $k \in \{1,\dots,N\}$ identical free-rolling spheres (radius \SI{55}{mm}) on a flat back plate (\SI{22.4}{cm}$\times$\SI{15.1}{cm}). $N = 5$; the robot has 12 leg joints and 4 hub motors, controlled at \SI{50}{Hz}. A back plate equipped with a $16\times 16$ binary tactile array ($256$ cells) provides contact feedback. The episode terminates when any ball falls off the plate or the robot falls. Object count $k$ is increased by a multi-criterion curriculum (episode-length ratio, support margin, dangerous fraction, edge-overflow fraction, velocity tracking error).

\paragraph{Multi-ball observation.} At policy time $t$ the agent observes a proprioception window $p_t \in \RR^{d_p}$ (history length $6$, see \cref{sec:obs}) and a multi-ball history tensor $X_t \in \RR^{H \times N \times d}$ ($H{=}4$ object frames, $d{=}14$: ball position, velocity, orientation, angular velocity, activity flag). Inactive slots are zero-masked. We use $H$ exclusively for the object-history length throughout the main body, and write $X$ for a generic such tensor when no time index is needed.  Because the balls are identical, the correct physical equivalence class of a history tensor $X$ is \[ [X]_{\Gframe} = \{g \cdot X : g \in \Gframe\}, \] and the policy value function should be constant on each orbit. We define the \emph{curriculum-reachable ball count} at training budget $B$:
\begin{equation}
  K_{\max}(E; B) = \max\!\bigl\{k : \mathrm{Succ}_k(E; B) \ge \eta\bigr\},
\end{equation}
where $\mathrm{Succ}_k$ is the success rate on $k$-ball evaluation episodes and $\eta$ is the advancement threshold.

\paragraph{Evaluation metrics.}  We define two post-training success metrics evaluated over 100 episodes; the 95\% Wilson confidence interval is approximately $\pm 5$--$8$\,pp at success rates near 0 or 1 and widest ($\approx\pm 10$\,pp) near 50\%.  \emph{No-drop}: the episode completes all 500 steps without any sphere falling off the plate---this is the \emph{primary} task metric, measuring whether the symmetry-aware encoder can maintain multi-sphere transport.  \emph{Strict}: additionally requires zero near-edge events throughout the episode (the ball never exits the safe zone), measuring conservative margin performance.  Because near-edge events are sensitive to the amount of training received at each curriculum level, the strict metric conflates architectural quality with training exposure; we therefore report both but treat no-drop as the primary measure of encoder capability.

% ============================================================================
\section{Method}
\label{sec:method}
% ============================================================================

\subsection{Per-Frame Deep Sets (\PFDS)}
\label{sec:pfds}

\PFDS decouples \emph{within-frame} set aggregation from \emph{cross-frame} temporal fusion. It modifies standard Deep Sets~\citep{zaheer2017deepsets} as
\begin{align}
  \label{eq:pfds}
  h_t &= \Pool\!\bigl(\{\phi(x_{t,i})\}_{i \in \mathcal{A}_t}\bigr),
        \quad t = 1,\dots,H, \\
  z   &= \rho\!\bigl([h_1;\,h_2;\,\dots;\,h_H]\bigr),
\end{align}
where $\phi : \RR^d \to \RR^e$ is a shared per-element MLP, $\Pool$ is a continuous permutation-invariant aggregator on finite multisets (\cref{def:perm-inv}; we use masked sum throughout, and the construction also admits mean or max), and $\rho$ is a temporal readout MLP. A slot permutation $\pi_t$ acts on both the observation tensor and the activity mask, mapping $(\mathcal{A}_t, x_{t,\cdot})$ to $(\pi_t(\mathcal{A}_t), x_{t,\pi_t^{-1}(\cdot)})$, so that $h_t$ depends only on the multiset $\{x_{t,i}\}_{i\in\mathcal{A}_t}$.

\begin{proposition}[$\Gframe$-invariance of \PFDS] \label{prop:gframe-inv} $f_{\mathrm{PF}}$ is $\Gframe$-invariant in the sense of \cref{def:gframe-inv}: for every $g \in \Gframe$ and every $X$, $f_{\mathrm{PF}}(g \cdot X) = f_{\mathrm{PF}}(X)$.
\end{proposition}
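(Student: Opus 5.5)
The plan is to factor $f_{\mathrm{PF}}$ into a per-frame stage followed by the readout $\rho$. The group $\Gframe=(\SN)^H$ acts independently on each frame, so it is enough to show that each frame summary $h_t$ is unchanged by its own permutation $\pi_t$. After that, $\rho$ is applied to an unchanged input.

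Fix $g=(\pi_1,\dots,\pi_H)\in\Gframe$ and a tensor $X$ with activity masks $(\mathcal{A}_t)_{t=1}^H$. By definition, $g$ sends the frame-$t$ data $(\mathcal{A}_t, x_{t,\cdot})$ to $(\pi_t(\mathcal{A}_t), x_{t,\pi_t^{-1}(\cdot)})$. Frame $t$ of $g\cdot X$ depends only on $\pi_t$ and on frame $t$ of $X$. Write $h_t'$ for the pooled feature of frame $t$ computed from $g\cdot X$.

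The key step is the reindexing
\[
\bigl\{\phi\bigl(x_{t,\pi_t^{-1}(i)}\bigr)\bigr\}_{i\in\pi_t(\mathcal{A}_t)} = \bigl\{\phi(x_{t,j})\bigr\}_{j\in\mathcal{A}_t},
\]
obtained by substituting $j=\pi_t^{-1}(i)$, which is a bijection $\pi_t(\mathcal{A}_t)\to\mathcal{A}_t$. This is an equality of multisets, and the same $\phi$ is applied to every element. Since $\Pool$ is a function of finite multisets (\cref{def:perm-inv}), this gives $h_t'=h_t$. For masked sum the same conclusion is just commutativity of finite addition. The zero-masked inactive slots do not contribute, because they are excluded by the mask before pooling.

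Next, I apply this frame by frame: $h_t'=h_t$ for every $t$. Hence the concatenations agree, $[h_1';\dots;h_H']=[h_1;\dots;h_H]$, and applying $\rho$ gives $f_{\mathrm{PF}}(g\cdot X)=\rho([h_1;\dots;h_H])=f_{\mathrm{PF}}(X)$. Nothing is needed about $\rho$ beyond its being a function: the concatenation order is the fixed temporal order, and $\Gframe$ never permutes time indices.

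The only delicate point is bookkeeping: the mask must be transported together with the features, so that the index sets before and after the permutation correspond bijectively. Once the action is written as in the setup of \cref{eq:pfds}, this is immediate.
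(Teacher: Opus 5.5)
Your proposal is correct and follows the paper's proof essentially step for step. Both transport the mask as $\mathcal{A}_t \mapsto \pi_t(\mathcal{A}_t)$ and reindex the frame-$t$ multiset via $j = \pi_t^{-1}(i)$, so that $\Pool$, as a multiset function, gives $h_t(g\cdot X) = h_t(X)$; both then conclude because $\rho$ receives the identical concatenation $[h_1;\dots;h_H]$.
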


\textit{(Proof in \cref{sec:proofs}.)}

\smallskip

\begin{figure}[t] \centering \includegraphics[width=0.92\linewidth]{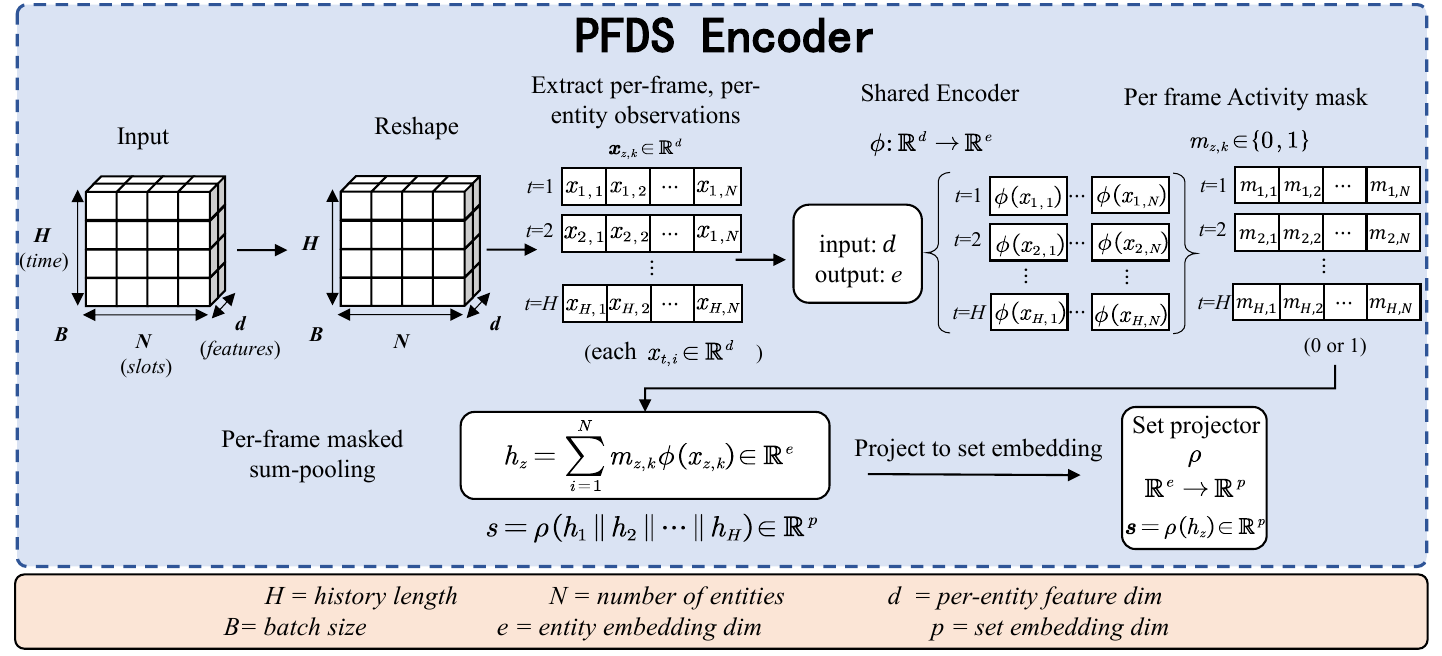} \caption{\PFDS encoder architecture. Per-object observations $x_{t,i}$ are pooled independently within each history frame $t$ to produce frame embeddings $h_t$; these are then concatenated and processed by the readout MLP $\rho$. Because pooling occurs \emph{within} each frame, the encoder is invariant to independent per-frame permutations ($\Gframe$-invariant by \cref{prop:gframe-inv}).} \label{fig:compare}
\end{figure}

\paragraph{Contrast with \HCDS.} The standard Deep Sets encoder~\citep{zaheer2017deepsets} applied to multi-frame histories forms per-slot tokens $y_i = [x_{1,i};\dots;x_{H,i}] \in \RR^{Hd}$ and computes
\begin{equation}
  \label{eq:hcds}
  f_{\mathrm{HC}}(X) = \rho\!\Bigl(\sum_{i=1}^{N} \phi_Z(y_i)\Bigr),
  \quad \phi_Z : \RR^{Hd} \to \RR^e.
\end{equation}
We call this \HCDS (Deep Sets, History-Concat). Because $\phi_Z$ couples frame index $t$ with slot index $i$, $f_{\mathrm{HC}}$ satisfies only $\Gdiag$ invariance, not $\Gframe$. \PFDS recovers $\Gframe$ invariance by pooling within each frame \emph{before} temporal concatenation.

\begin{proposition}[$\Gdiag$ but not $\Gframe$ invariance of \HCDS] \label{prop:gdiag} $f_{\mathrm{HC}}$ is $\Gdiag$-invariant. It is, however, not $\Gframe$-invariant in general (in the sense of \cref{def:gframe-inv}): for any $H \ge 2$ and $N \ge 2$ there exist a continuous feature map $\phi_Z$ (for instance the flattened outer product $\phi_Z(y) = \mathrm{vec}(yy^{\!\top})$), a continuous readout $\rho$, an input $X$, and an element $g \in \Gframe \setminus \Gdiag$ such that $f_{\mathrm{HC}}(g \cdot X) \ne f_{\mathrm{HC}}(X)$.
\end{proposition}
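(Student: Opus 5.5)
The proof has two parts: diagonal invariance, which follows from reindexing a sum, and a concrete counterexample showing that $\Gframe$-invariance fails.

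\emph{$\Gdiag$-invariance.} Take $g=(\pi,\dots,\pi)\in\Gdiag$. By \eqref{eq:gframe}, $(g\cdot X)_{t,i}=x_{t,\pi^{-1}(i)}$ for every $t$. So the per-slot token of $g\cdot X$ at slot $i$ is
\[
y_i' = [x_{1,\pi^{-1}(i)};\dots;x_{H,\pi^{-1}(i)}] = y_{\pi^{-1}(i)}.
\]
Hence $\sum_{i=1}^N \phi_Z(y_i')=\sum_{i=1}^N\phi_Z(y_{\pi^{-1}(i)})=\sum_{j=1}^N\phi_Z(y_j)$, because $\pi^{-1}$ is a bijection of $\{1,\dots,N\}$. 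Applying $\rho$ to both sides gives $f_{\mathrm{HC}}(g\cdot X)=f_{\mathrm{HC}}(X)$. If inactive slots are zero-masked, the mask is permuted along with the data, so the argument is unchanged.

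\emph{Failure of $\Gframe$-invariance.} We build an explicit witness. Take $d=1$ and $\phi_Z(y)=\mathrm{vec}(yy^\top)$. Let $\rho$ be the linear functional that extracts the $(1,2)$ entry, so
\[
f_{\mathrm{HC}}(X)=\sum_{i=1}^N x_{1,i}x_{2,i}.
\]
This quantity couples frames $1$ and $2$ within each slot. Let $\tau$ be the transposition $(1\,2)\in\SN$, which exists because $N\ge2$. Set $g=(\tau,\mathrm{id},\dots,\mathrm{id})$; this uses $H\ge2$. Since $\tau\neq\mathrm{id}$, the components of $g$ differ, so $g\in\Gframe\setminus\Gdiag$.

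Now choose $X$ with $x_{1,1}=1$, $x_{1,2}=0$, $x_{2,1}=1$, $x_{2,2}=0$, and all other entries zero. Then $f_{\mathrm{HC}}(X)=1$. After applying $g$, frame $1$ becomes $(0,1,0,\dots)$ while frame $2$ is still $(1,0,\dots)$, so $f_{\mathrm{HC}}(g\cdot X)=0\neq 1$.

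For general $d$, embed the same construction in the first feature coordinate and set the remaining coordinates to zero. If masking is relevant, mark all slots active so that the mask plays no role. Both $\phi_Z$ and $\rho$ are continuous (polynomial and linear), as the statement requires.

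The only point that needs care is making the witness fully explicit: $g$ must lie in $\Gframe$ but not in $\Gdiag$, and the chosen $\rho$ must actually detect the change in the cross-frame product. The construction above is arranged so that both checks are immediate.
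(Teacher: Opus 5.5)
Your proof is correct and follows essentially the same route as the paper: the same reindexing argument for $\Gdiag$-invariance, and the same witness built from $\phi_Z(y)=\mathrm{vec}(yy^{\top})$, with $\rho$ reading off the frame-1/frame-2 cross entry under a transposition in a single frame. Your $X$ is a specific numerical instance of the paper's $2\times 2$ example, and handling general $H,N$ directly replaces its padding remark. One small indexing fix: when you embed the construction for per-object dimension $d>1$, the entry of $y_iy_i^{\top}$ that pairs the first coordinates of frames $1$ and $2$ is $(1,d+1)$, not $(1,2)$. So $\rho$ must be redefined to read that entry; with the $(1,2)$ entry and zero padding, both sides would vanish.
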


\textit{(Proof in \cref{sec:proofs}.)}

\smallskip

The ratio $|\Gframe|/|\Gdiag| = (N!)^{H-1}$ quantifies the residual representational gap; for $N{=}5,\ H{=}4$ this equals $(120)^3 \approx 1.7{\times}10^6$.

\begin{theorem}[Universal approximation of \PFDS for $\Gframe$-invariant functions] \label{thm:approx} Fix the per-frame cardinalities $m_t = |\mathcal{A}_t|$ for $t = 1,\dots,H$ and let $K \subset \RR^d$ be a compact box of admissible per-object observations. Let $\Omega = \prod_{t=1}^{H} K^{m_t}$, and let $G = \prod_{t=1}^{H} \mathfrak{S}_{m_t}$ act on $\Omega$ frame-wise by permuting each $K^{m_t}$ factor. This action coincides with the induced action of $\Gframe$ on the fixed-cardinality canonical slice $\Omega \hookrightarrow \RR^{H\times N\times d}$ (active-slot embedding with the remaining $N-m_t$ slots zero-masked). If $\phi : \RR^d \to \RR^e$ and $\rho : \RR^{He} \to \RR^a$ are realised by sufficiently wide continuous MLPs and $\Pool$ is sum-pooling, then \PFDS is dense, under the sup-norm, in the space of continuous $G$-invariant functions $f: \Omega \to \RR^a$ (equivalently, in the continuous functions on the quotient $\mathcal{X} = \Omega/G$).
\end{theorem}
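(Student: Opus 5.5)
The plan is to reduce the statement to the classical single-set Deep Sets representation theorem, applied frame by frame, and then approximate the resulting exact representation with MLPs.

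\textbf{Step 1: embed each frame into a fixed-dimensional space.} For a fixed cardinality $m = m_t$, take a continuous injective symmetric embedding $\Phi_m : K^m/\mathfrak{S}_m \to \RR^{e_0}$ of the form $\Phi_m(x_1,\dots,x_m) = \sum_{i=1}^m \psi(x_i)$.

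A standard choice for $\psi$ is the multisymmetric power-sum monomials $\psi(x) = (x^\alpha)_{1 \le |\alpha| \le m}$, with $\alpha \in \mathbb{N}^d$. These sums generate the ring of multisymmetric polynomials, so they separate $\mathfrak{S}_m$-orbits on $\RR^{d\times m}$. Since $K^m/\mathfrak{S}_m$ is compact and Hausdorff, a continuous injection is a homeomorphism onto its image $C_m = \Phi_m(K^m)$, which is compact.

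To use one $\psi$ for all frames, take $\psi$ built with degree bound $\max_t m_t \le N$. Orbit separation still holds for smaller $m$, because the power sums up to degree $N \ge m$ include those up to degree $m$.

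\textbf{Step 2: the exact representation.} Let $f:\Omega\to\RR^a$ be continuous and $G$-invariant. Then $f$ factors through $\prod_t K^{m_t}/\mathfrak{S}_{m_t}$, which is homeomorphic to $C := \prod_t C_{m_t} \subset \RR^{He_0}$ via $\Phi = (\Phi_{m_1},\dots,\Phi_{m_H})$.

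Hence $f = \tilde\rho \circ \Phi$, where $\tilde\rho := f\circ\Phi^{-1}$ is continuous on the compact set $C$. Extend $\tilde\rho$ continuously to all of $\RR^{He_0}$ by Tietze. This is exactly the \PFDS form~\eqref{eq:pfds} with ideal maps $\phi=\psi$ and $\rho=\tilde\rho$, with sum pooling over the active slots. The masked zero slots contribute nothing because pooling runs only over $\mathcal{A}_t$.

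\textbf{Step 3: approximation by MLPs.} Fix $\varepsilon>0$.

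First, $\tilde\rho$ is uniformly continuous on a compact neighbourhood $C^\delta$ of $C$. Choose $\delta$ with $|\tilde\rho(u)-\tilde\rho(v)|<\varepsilon/2$ whenever $u,v \in C^\delta$ and $\|u-v\|<\delta$.

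Next, approximate $\psi$ on the box $K$ by an MLP $\phi$ with $\sup_K\|\phi-\psi\|<\delta/(N\sqrt{H})$, using classical universal approximation. Then every pooled vector $[h_1;\dots;h_H]$ lies within $\delta$ of $\Phi(X)$, and in particular inside $C^\delta$.

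Finally, approximate $\tilde\rho$ on $C^\delta$ by an MLP $\rho$ to within $\varepsilon/2$. The triangle inequality then gives $\sup_\Omega \|f_{\mathrm{PF}}-f\|<\varepsilon$.

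The result is unaffected by taking $e = e_0$ and by composing with exact invariance, which \cref{prop:gframe-inv} already guarantees. The equivalence with continuous functions on $\mathcal{X}=\Omega/G$ follows from the quotient universal property.

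\textbf{Main obstacle.} I expect the main difficulty to be Step 1: verifying that the sum-decomposable map $\Phi_m$ is injective on orbits for vector-valued elements ($d>1$). In the scalar case Newton's identities suffice. For $d>1$ I would invoke the classical fact that multisymmetric power sums of degree $\le m$ generate the multisymmetric invariants (in characteristic zero). Together with the fact that polynomial invariants of a finite group separate its orbits, this gives injectivity. A fallback, if that citation is unwanted, is to project onto a generic linear functional and apply the scalar result to several projections. That route requires care to separate orbits, and I would handle it by using enough generic directions.
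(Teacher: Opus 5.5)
Your proposal is correct and takes the same route as the paper's sketch: shared monomial power-sum features with degree bound $\max_t m_t$, per-frame injectivity, the compact-to-Hausdorff homeomorphism onto the concatenated image, a Tietze extension, and MLP approximation of the readout. For injectivity you argue that ``power sums generate the multisymmetric invariants, and finite-group invariants separate orbits,'' which is an equally valid substitute for the paper's appeal to the multisymmetric Newton identities. Your Step 3 goes a bit further than the paper, whose sketch keeps $\phi$ as the exact monomial map rather than an MLP. To make that step airtight, fix the compact neighbourhood first (e.g.\ the closed $1$-neighbourhood of $C$) and then choose $\delta\le 1$ from uniform continuity there; this removes the circularity of defining $C^\delta$ in terms of $\delta$.
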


\textit{(Proof sketch in \cref{sec:proofs}; the result is a direct corollary of Deep Sets universality applied frame-wise followed by an MLP readout.)}

\smallskip

\subsection{Tactile Student Policy (TactSet)}
\label{sec:tactset}

The \PFDS teacher uses privileged ball-state observations $X_t$ during training. On the physical robot these are unavailable; only a $16{\times}16$ binary contact map is observed. \TactSet bridges this gap by imitating the teacher from a tactile map $\tau_t \in \{0,1\}^{16{\times}16}$ defined as the Boolean union of active balls' contact footprints. Let $(\bar x_{t,i},\bar y_{t,i},\bar z_{t,i})$ denote the centre of ball $i$ at time $t$ expressed in the back-plate frame, and $(\bar x_s,\bar y_s)$ the planar centre of cell $s$ in the same frame:
\begin{equation}
  \label{eq:tac}
  \tau_{t,s} = \bigvee_{i \in \mathcal{A}_t}
  \mathbf{1}\!\bigl(|\bar x_{t,i} - \bar x_s| < r + \delta_x,\;
                     |\bar y_{t,i} - \bar y_s| < r + \delta_y,\;
                     |\bar z_{t,i} - (z_{\mathrm{plate}} + r)| < \epsilon_z\bigr),
\end{equation}
where $z_{\mathrm{plate}}$ is the height of the plate surface and the $z$-condition gates contact between the sphere's bottom pole and the plate.
Because $\vee$ is symmetric in its arguments, $\tau_t$ depends only on the multiset of contact footprints and is a function of $[X_t]_{\Gframe}$, semantically aligning the student input with the teacher's observation. We stack tactile maps into a history $\mathcal{T}_t = [\tau_{t-H+1};\dots;\tau_t] \in \{0,1\}^{H\times16\times16}$ matching the object-history length $H$. The student minimises
\begin{equation}
  \label{eq:distill}
  \min_\theta\;
  \EE_{(p_t, X_t, \mathcal{T}_t) \sim \mathcal{D}}\!\left[
    \bigl\|\pi_S^\theta(p_t, \mathcal{T}_t) - \pi_T(p_t, X_t)\bigr\|_2^2
  \right],
\end{equation}
with DAgger-style online aggregation (behaviour-cloning initialisation followed by 7 rounds of online rollout with \PFDS labelling). The student pre-encoder applies a $3{\times}3$ 2-D CNN to the tactile stack ($H{\times}16{\times}16$, 64 channels) followed by a GRU (hidden size $512$) that produces a 64-D embedding fed, together with the proprioception embedding, into the policy head (full hyperparameters in \cref{tab:supp_distill}).

% ============================================================================
\section{Experiments}
\label{sec:exp}
% ============================================================================

\begin{figure*}[t]
  \centering
  \includegraphics[width=\linewidth]{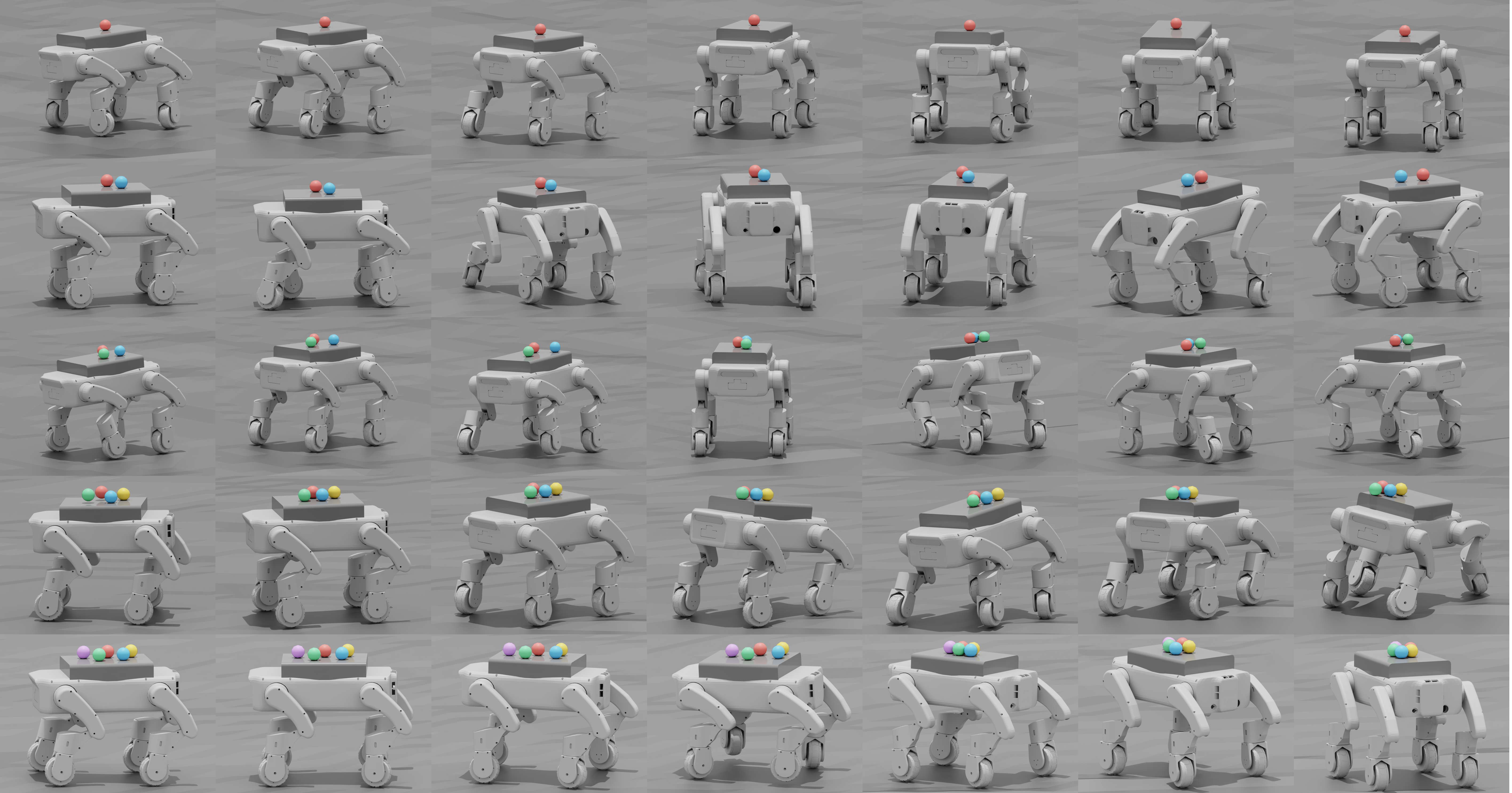}
  \caption{Simulation demonstration of multi-sphere transport with S2M-Trek. Each row shows frames from a representative episode at $k{=}1$ (top) through $k{=}5$ (bottom) balls.}
  \label{fig:qualitative}
\end{figure*}

\subsection{Setup}

All teacher policies are trained in Isaac Lab~\citep{mittal2025isaaclab} with PPO~\citep{schulman2017ppo} using 4096 parallel environments for 30\,000 iterations. Reward, curriculum, and hyperparameters are identical across encoders; only the set-encoding module varies. Full hyperparameters are listed in \cref{tab:supp_hyperparams}.

\paragraph{Encoder baselines.} We compare eight architectures spanning three symmetry levels (\cref{tab:encoders}), all configured to comparable parameter budgets through matched hidden widths.
\textit{$G_0$ (no invariance):} \FlatMLP flattens $X$ into a single vector; \BranchMLP encodes each slot $X_{:,i,:}$ with a shared MLP and concatenates the results.
\textit{$\Gdiag$ invariance:} \DSHC~\citep{zaheer2017deepsets} forms history-concatenated tokens $y_i=[x_{1,i};\dots;x_{H,i}]$ and applies Deep Sets pooling; Set Transformer~\citep{lee2019settransformer} replaces the inner network with ISAB/PMA attention on the same tokens. 
% Their unaugmented variants (\DSHCOFF, Set Transformer without augmentation) expose the slot-identity shortcut by fixing slot assignments during training.
\textit{$\Gframe$ invariance (ours):} \PFDS pools within each history frame before temporal readout (\cref{prop:gframe-inv}). Set Transformer~(PF) is the higher-capacity analogue---frame-wise ISAB/PMA attention---but is not used as the distillation teacher because its wall-clock training time is the highest among $\Gframe$ encoders. \PFDSOFF disables slot randomisation for \PFDS, isolating the architectural guarantee from the data-augmentation pathway.

\subsection{Main Comparison: Architecture Determines Curriculum Reachability}
\label{sec:exp-main}

\begin{table}[t] \centering \small \caption{Teacher encoder comparison under equal PPO budget (30\,000 iters). $K_{\max}$: highest curriculum level reached. ``Adv.\ to $k$'': iterations (thousands) to first reach that level; ``---'' = not reached within budget. Inv.: invariance group. Strict/no-drop: 5-ball success rates (100 trials, 95\% Wilson CI $\approx\pm5$--$8$ pp); no-drop is the primary metric (\cref{sec:problem}). $\dagger$: only $\Gframe$ encoder reaching $k{=}5$ without slot-permutation augmentation.} \label{tab:encoders} \setlength{\tabcolsep}{3pt} \begin{tabular}{lccccccc} \toprule \textbf{Encoder} & \textbf{Inv.} & \textbf{perm} & \textbf{$K_{\max}$} & \textbf{Adv.\ to 3} & \textbf{Adv.\ to 5} & \textbf{strict} & \textbf{no-drop} \\ \midrule \FlatMLP                   & $G_0$      & T & $\le 1$ & ---            & ---            & $0\%$  & $0\%$ \\ \BranchMLP                 & $G_0$      & T & $\le 2$ & ---            & ---            & $0\%$  & $2\%$ \\ \DSHCOFF                   & $\Gdiag$   & F & $\le 2$ & ---            & ---            & $0\%$  & $0\%$ \\ \DSHC                      & $\Gdiag$   & T & $5$     & ${\sim}7.4$k   & ${\sim}10.9$k  & $68\%$ & $98\%$ \\ Set Transformer            & $\Gdiag$   & T & $5$     & ${\sim}3.6$k   & ${\sim}7.7$k   & $10\%$ & $94\%$ \\ Set Transformer (PF, ours) & $\Gframe$  & T & $5$     & ${\sim}7.5$k   & ${\sim}16.4$k  & $54\%$ & $100\%$ \\\PFDSOFF (ours)$^\dagger$ & $\Gframe$  & F & $5$     & ${\sim}16.9$k  & ${\sim}27.9$k  & $41\%$ & $98\%$ \\ \textbf{\PFDS (ours)}      & $\Gframe$  & T & $5$     & ${\sim}3.9$k   & ${\sim}6.3$k   & $41\%$ & $100\%$ \\ \bottomrule
  \end{tabular}
\end{table}

\Cref{fig:training_curves} reports curriculum progression and \cref{fig:success_rates} reports per-encoder success rates. The results form a clear hierarchy consistent with symmetry-group containment: $G_0$ encoders (\FlatMLP, \BranchMLP) stall at or below $k{=}2$; $\Gdiag$ encoders (\DSHC, Set Transformer) reach $k{=}5$ only with slot-permutation augmentation, and \DSHCOFF(slot-permutation augmentation off) is blocked below $k{=}3$; $\Gframe$ encoders (\PFDS, Set Transformer~(PF)) advance under both regimes, the architectural guarantee that motivates our construction.

\paragraph{Primary metric (no-drop).} \PFDS and Set Transformer~(PF) achieve 100\% no-drop at $k{=}5$, the highest of any encoder, and \PFDS is the fastest to reach the top curriculum level ($\sim$6.3k iters to $k{=}5$, vs.\ $\sim$10.9k for \DSHC and $\sim$7.7k for Set Transformer). The 2\,pp no-drop gap over \DSHC is within sampling noise at 100 trials; the substantive differentiator is that \PFDSOFF still attains 98\% no-drop without augmentation while \DSHCOFF does not advance past $k{=}2$ within budget. Robustness with a $\Gframe$-invariant architecture is therefore independent of data augmentation, whereas with \DSHC it is not.

\begin{figure}[t] \centering \includegraphics[width=\linewidth]{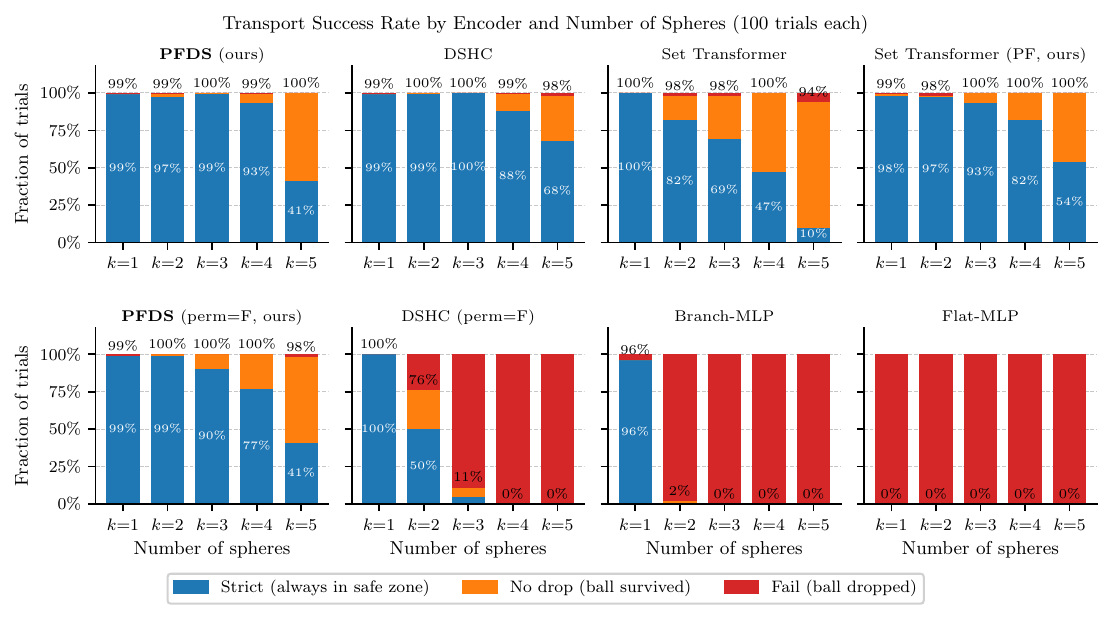} 
  \caption{Transport success rates for all eight encoder architectures over 100 trials $\times$ 5 ball counts. Blue: strict (always within safe zone); orange: no-drop (ball survived, may briefly exit); red: fail. $\Gframe$-invariant encoders (\PFDS, Set Transformer~(PF)) achieve 100\% no-drop at $k{=}5$; \DSHC with augmentation achieves 98\%. $G_0$ encoders drop to near-zero no-drop beyond $k{=}1$; \DSHCOFF collapses beyond $k{=}2$.} \label{fig:success_rates} \end{figure}

\begin{figure}[t] \centering \begin{minipage}[t]{0.58\linewidth} \centering \includegraphics[width=\linewidth]{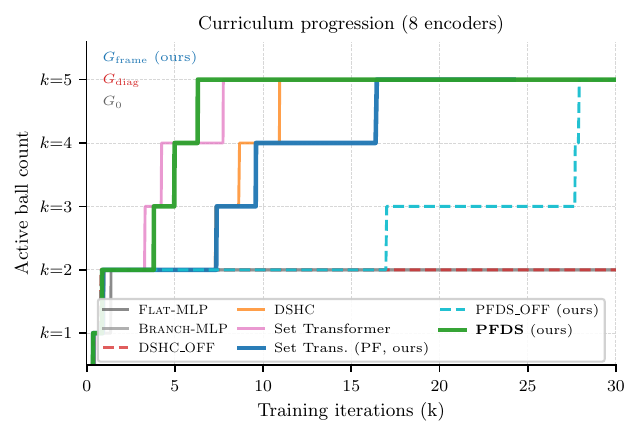}
  \end{minipage}
\hfill \begin{minipage}[t]{0.39\linewidth} \centering \includegraphics[width=\linewidth]{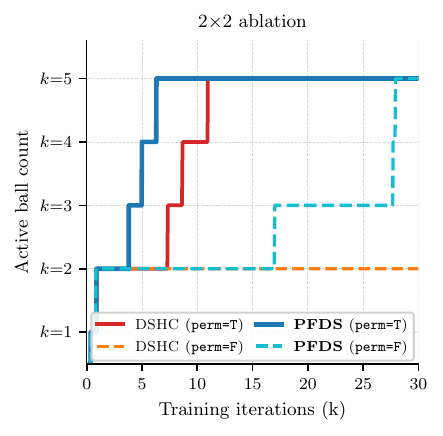}
  \end{minipage}
\caption{Training dynamics across encoder variants. \textbf{Left}: curriculum progression (active ball count vs.\ iteration) for all eight encoders. $\Gframe$ architectures (\PFDS, Set Transformer~(PF)) reach $k{=}5$ reliably; $\Gdiag$ encoders only with slot-permutation augmentation; $G_0$ encoders stall at $k{\le}2$. \textbf{Right}: $2{\times}2$ ablation isolating the architectural and data-augmentation pathways. \PFDS reaches $k{=}5$ with and without augmentation; \DSHC fails without it.} \label{fig:training_curves} \label{fig:ablation}
\end{figure}

\subsection{$2 \times 2$ Key Ablation: Two Paths to $\Gframe$ Robustness}
\label{sec:exp-2x2}

\begin{table}[t] \centering \small \caption{$2 \times 2$ ablation: encoder $\times$ per-step orbit augmentation. ``Adv.\ to 3'' is the number of training iterations at which the curriculum first reaches $k \ge 3$; ``---'' means not reached by 30\,000 iterations.} \label{tab:perm_ablation} \begin{tabular}{lcc|cc} \toprule & \multicolumn{2}{c|}{\textbf{permute=True}} & \multicolumn{2}{c}{\textbf{permute=False}} \\ \textbf{Encoder} & $K_{\max}$ & Adv.\ to 3 & $K_{\max}$ & Adv.\ to 3 \\ \midrule \DSHC ($\Gdiag$)  & $5$ & ${\sim}7.4$k & $\le 2$ & --- \\ \PFDS ($\Gframe$)  & $5$ & ${\sim}3.9$k & $5$ & ${\sim}16.9$k \\ \bottomrule
  \end{tabular}
\end{table}

\Cref{tab:perm_ablation,fig:ablation} confirm the theoretical prediction in our runs: \DSHC reaches $k{=}5$ only with per-step augmentation ($\pi_t\!\sim\!\mathrm{Uniform}(\SN)$) and stalls at $k{=}2$ without it within the 30\,000-iteration budget---slot index becomes a spurious proxy for curriculum stage that the $\Gdiag$ encoder cannot ignore. \PFDS succeeds in both settings; \cref{sec:discussion} gives mechanistic details.

\begin{figure}[t]
  \centering
  \begin{minipage}[b]{0.50\linewidth}
    \centering
    \includegraphics[width=\linewidth]{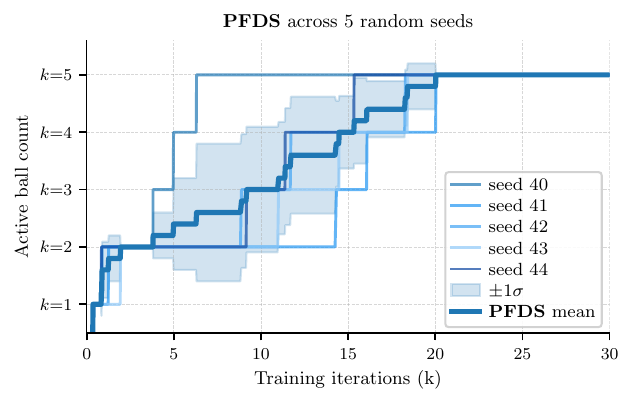}
    \subcaption{Five \PFDS seeds reach $k_{\max}{=}5$.}\label{fig:seed_variance}
  \end{minipage}\hfill
  \begin{minipage}[b]{0.45\linewidth}
    \centering
    \includegraphics[width=0.92\linewidth]{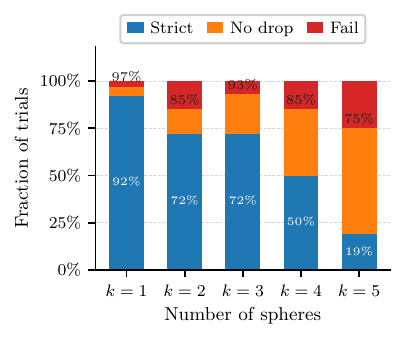}
    \subcaption{\TactSet, 100 trials per $k$.}\label{fig:tactset_results}
  \end{minipage}
  \caption{(a) Multi-seed \PFDS curriculum progression (mean $\pm 1\sigma$, seeds 40--44). (b) Tactile-student transport success (bar-top: no-drop; inside blue bars: strict).}
  \label{fig:seed_tactset}
\end{figure}

\subsection{Multi-Seed Robustness and Tactile Distillation}
\label{sec:exp-distill}

\emph{Multi-seed (\cref{fig:seed_variance}).} Five independent \PFDS runs (seeds 40--44, identical PPO hyperparameters) all reach $k_{\max}{=}5$ within the 30\,000-iteration budget; the $\pm 1\sigma$ band reflects variation in convergence timing, not final performance.

\emph{Tactile distillation (\cref{fig:tactset_results}).} We distill the \PFDS teacher into \TactSet via the DAgger protocol of \cref{sec:tactset} (400\,000 BC steps + $7{\times}200\,000$ DAgger steps). The Boolean union contact map is $\Gframe$-invariant, placing the student input in the teacher's equivalence class. Across 100 trials of 500 steps per $k$, \TactSet attains 97\%/92\% no-drop/strict at $k{=}1$ and 75\%/19\% at $k{=}5$ (teacher: 100\%/41\%). 
% The $25$\,pp $k{=}5$ no-drop gap reflects boundary ambiguity in the union map and finite DAgger coverage.

% ============================================================================
\section{Conclusion}
\label{sec:conclusion}
% ============================================================================

We introduced Per-Frame Deep Sets (\PFDS), an intrinsically $\Gframe$-invariant Deep Sets variant for multi-sphere transport on a wheel-legged robot. Aligning the encoder's symmetry group with $\Gframe{=}(\SN)^H$---which is $(N!)^{H-1}$ larger than the diagonal group satisfied by history-concatenation encoders---is the key design step. The eight-encoder comparison and the $2{\times}2$ ablation together show that $\Gframe$-invariant architectures are the only ones robust without slot-permutation augmentation; no-drop differences at $k{=}5$ among augmented encoders fall within 100-trial sampling noise. \PFDS reaches $k_{\max}{=}5$ across all five seeds, and \TactSet distils the teacher into a purely tactile student that inherits $\Gframe$ invariance from the Boolean union contact map.

\paragraph{Limitations and future work.} All numbers are simulation-only; physical robot deployment is in progress and reported separately. 100-trial Wilson intervals ($\pm 5$--$8$\,pp) support qualitative trends but not fine-grained comparisons. At higher curriculum levels ($k{=}4$ and $k{=}5$), the \SI{22.4}{cm}$\times$\SI{15.1}{cm} back plate becomes a binding geometric constraint: five spheres of radius \SI{55}{mm} require pairwise centre distances of at least \SI{11}{cm} and cover most of the plate footprint, leaving little spatial margin for ball re-positioning. The strict-rate decrease at $k{=}5$ therefore reflects, in part, this physical packing limit rather than a controller deficiency; we plan to enlarge the tactile back plate and revisit higher-$k$ transport in follow-up work. Composed per-type Deep Sets blocks should also extend the principle to heterogeneous object sets.

% ── acknowledgments (auto-hidden in initial submission) ─────────────────────
\acknowledgments{Acknowledgments will be added in the camera-ready version.}

% ── references ──────────────────────────────────────────────────────────────
\bibliography{references}

% ============================================================================
% Appendix
% ============================================================================
\appendix

\section{Proof Details}
\label{sec:proofs}

\subsection{Preliminaries: Permutation and $\Gframe$-Invariance}
\label{sec:invariance-defs}

For completeness we recall the two notions of invariance used throughout the paper.

\begin{definition}[Permutation invariance of a multiset aggregator]
\label{def:perm-inv}
Let $\mathcal{M}_{\le N}(\RR^e)$ denote the set of finite multisets in $\RR^e$ of cardinality at most $N$. A continuous map $\Pool:\mathcal{M}_{\le N}(\RR^e)\to\RR^e$ is \emph{permutation-invariant} if its value depends only on its argument as a multiset (with multiplicity), i.e.\ for any finite indexed family $(z_i)_{i\in\mathcal{A}}$ with $|\mathcal{A}|\le N$ and any bijection $\sigma:\mathcal{A}\to\mathcal{A}$,
\[
  \Pool\!\bigl(\{z_{\sigma(i)}\}_{i\in\mathcal{A}}\bigr) = \Pool\!\bigl(\{z_i\}_{i\in\mathcal{A}}\bigr).
\]
Equivalently, for each cardinality $m \le N$, the restriction of $\Pool$ to multisets of size $m$ corresponds to a symmetric continuous function $(\RR^e)^m \to \RR^e$. Standard examples include masked sum, mean, and max.
\end{definition}

\begin{definition}[$\Gframe$-invariance of a function on history tensors]
\label{def:gframe-inv}
With $\Gframe = (\SN)^H$ acting on $X\in\RR^{H\times N\times d}$ by $(g\cdot X)_{t,i} = x_{t,\pi_t^{-1}(i)}$ for $g=(\pi_1,\dots,\pi_H)$, a function $f:\RR^{H\times N\times d}\to\RR^a$ is \emph{$\Gframe$-invariant} if
\[
  f(g\cdot X) = f(X) \qquad \text{for all } X \text{ and all } g\in\Gframe.
\]
A history tensor $X$ and its $\Gframe$-orbit $[X]_{\Gframe} = \{g\cdot X : g\in\Gframe\}$ are identified under any $\Gframe$-invariant $f$, so $f$ factors through the quotient $\RR^{H\times N\times d}/\Gframe$.
\end{definition}

\subsection{Proof of \cref{prop:gframe-inv} (\PFDS is $\Gframe$-Invariant)}

\begin{proof}
Per (\ref{eq:gframe}), under $g = (\pi_1,\dots,\pi_H) \in \Gframe$ the new slot $i$ at frame $t$ holds the old value at slot $\pi_t^{-1}(i)$; equivalently, the old slot $j$ moves to $\pi_t(j)$, so the active set transforms as $\mathcal{A}_t \mapsto \pi_t(\mathcal{A}_t)$. Therefore
\[
  h_t(g \cdot X)
  = \Pool\!\bigl(\{\phi(x_{t,\pi_t^{-1}(i)})\}_{i\in\pi_t(\mathcal{A}_t)}\bigr)
  = \Pool\!\bigl(\{\phi(x_{t,j})\}_{j\in\mathcal{A}_t}\bigr)
  = h_t(X),
\]
where the second equality reindexes the multiset by $j = \pi_t^{-1}(i)$, which traverses $\mathcal{A}_t$ exactly once. The two multisets $\{\phi(x_{t,j})\}_{j \in \mathcal{A}_t}$ are identical, so $\Pool$, being a function on multisets, returns the same value. Hence $z(g \cdot X) = \rho([h_1; \dots; h_H]) = z(X)$.
\end{proof}

\subsection{Proof of \cref{prop:gdiag} (\HCDS is $\Gdiag$- but not $\Gframe$-Invariant)}

\emph{$\Gdiag$-invariance.} For $g = (\pi,\dots,\pi) \in \Gdiag$, every frame is permuted by the same $\pi$, so each per-slot token transforms as $y_i(g\cdot X) = y_{\pi^{-1}(i)}(X)$. The set $\{y_i\}_{i=1}^N$ is therefore permuted as a multiset, and
\[
  \sum_{i=1}^{N} \phi_Z(y_i(g\cdot X)) = \sum_{i=1}^{N} \phi_Z(y_{\pi^{-1}(i)}(X)) = \sum_{i=1}^{N} \phi_Z(y_i(X)),
\]
so $f_{\mathrm{HC}}(g \cdot X) = f_{\mathrm{HC}}(X)$.

\emph{Failure of $\Gframe$-invariance ($H{=}2$, $N{=}2$ counterexample).} Let $X = \begin{pmatrix} a & b \\ c & d \end{pmatrix}$ with rows indexing time and columns indexing slots, and take $g = (\mathrm{id}, (12)) \in \Gframe \setminus \Gdiag$. Then $(g \cdot X)_{1,:} = (a, b)$ and $(g \cdot X)_{2,:} = (d, c)$, so the \HCDS tokens are $y_1^g = (a, d)$, $y_2^g = (b, c)$, whereas $y_1 = (a, c)$, $y_2 = (b, d)$. For the linear feature $\phi_Z = \mathrm{id}$, $\sum_i y_i^g = (a+b, c+d) = \sum_i y_i$: the linear feature is incidentally invariant. For the flattened outer-product feature $\phi_Z(y) = \mathrm{vec}(yy^{\!\top})$, a direct computation gives
\[
  \sum_i y_i y_i^{\!\top}
  = \begin{pmatrix} a^2 + b^2 & ac + bd \\ ac + bd & c^2 + d^2 \end{pmatrix},
  \quad
  \sum_i y_i^g (y_i^g)^{\!\top}
  = \begin{pmatrix} a^2 + b^2 & ad + bc \\ ad + bc & c^2 + d^2 \end{pmatrix},
\]
so the $(1,2)$-entry differs by $(ad + bc) - (ac + bd) = (a-b)(d-c)$, which is nonzero whenever $a \ne b$ and $c \ne d$. Choose $\rho$ to project onto the coordinate of $\mathrm{vec}(\cdot)$ holding the $(1,2)$-entry. Then $f_{\mathrm{HC}}(g \cdot X) - f_{\mathrm{HC}}(X) = (a-b)(d-c) \ne 0$, exhibiting the required $\phi_Z, \rho, X$, and $g \in \Gframe \setminus \Gdiag$. The construction trivially extends to any $H\ge 2$, $N\ge 2$ by padding additional frames and additional slots with constants. \qed

\subsection{Proof Sketch of \cref{thm:approx} (Universal Approximation of \PFDS)}

Let $m_t$, $K$, $\Omega$, $G$, $\mathcal{X}$ be as in the theorem, and let $n_{\max} = \max_t m_t$. Since each frame has \emph{exactly} $m_t$ active slots, an orbit in $\Omega/G$ is identified with a tuple of multisets $(S_1,\dots,S_H)$ with $|S_t| = m_t$ (no cardinality ambiguity arises across frames).

\emph{(i) Per-frame injection by symmetric power sums.}  For each fixed $m$ and any compact $K \subset \RR^d$, the map
\[
  \phi_m : \RR^d \to \RR^{M_m}, \qquad \phi_m(x) = \bigl(p_{\alpha}(x)\bigr)_{|\alpha|\le m},
\]
collecting all monomials of degree at most $m$ in the $d$ coordinates of $x$, makes $\Phi_m(S) = \sum_{x\in S}\phi_m(x)$ a continuous injection on multisets of \emph{exactly} cardinality $m$: the resulting multivariate power-sum symmetric functions determine the elementary multisymmetric polynomials of $S$ via the multisymmetric Newton identities, and these in turn determine $S$ uniquely. (We use that $|S|$ is known at the time of decoding, so the multiset $\{0,\dots,0\}$ at one cardinality is distinguishable from the same elements at any other cardinality by context.) Taking $\phi$ to include the components $\phi_m$ for all $m \in \{m_1,\dots,m_H\}$ (or, equivalently, all $m \le n_{\max}$) produces a single continuous $\phi:\RR^d\to\RR^M$ that is shared across frames, matching the \PFDS architecture in (\ref{eq:pfds}). Each per-frame embedding $\Phi_t(S_t) = \sum_{x \in S_t}\phi(x)$ is then continuous and injective on $K^{m_t}/\mathfrak{S}_{m_t}$.

\emph{(ii) Joint injection.} Concatenating per-frame embeddings,
\[
  v(X) = [\Phi_1(X_1);\dots;\Phi_H(X_H)] \in \RR^{HM},
\]
gives a continuous map $\Omega \to \RR^{HM}$ that factors through $\mathcal{X}$ and is injective on $\mathcal{X}$, since two orbits agree on $v$ iff their per-frame multisets coincide for every $t$. Since $\Omega$ is compact, so is $\mathcal{X}$, and a continuous bijection from a compact space to a Hausdorff space (here $v: \mathcal{X} \to v(\mathcal{X}) \subset \RR^{HM}$) is a homeomorphism.

\emph{(iii) Readout approximation.} A continuous $G$-invariant function $f:\Omega \to \RR^a$ descends to a unique continuous $\bar f: \mathcal{X}\to\RR^a$, which by (ii) corresponds to a continuous $\tilde f = \bar f \circ v^{-1} : v(\mathcal{X}) \to \RR^a$. Because $v(\mathcal{X})$ is compact (hence closed) in the normal space $\RR^{HM}$, Tietze's extension theorem (applied coordinate-wise) extends $\tilde f$ to a continuous $\hat f : \RR^{HM}\to\RR^a$. Restricting $\hat f$ to a compact neighbourhood of $v(\mathcal{X})$ and applying the universal approximation theorem for multilayer perceptrons~\citep{hornik1989universal} yields a continuous $\rho : \RR^{HM} \to \RR^a$ such that $\sup_{X \in \Omega} \|\rho(v(X)) - f(X)\| < \varepsilon$ for any prescribed $\varepsilon > 0$. \qed

\section{Hyperparameters}
\label{sec:hyperparams}

\begin{table}[h] \centering \small \caption{PPO hyperparameters shared across all teacher encoder experiments.} \label{tab:supp_hyperparams} \begin{tabular}{lll} \toprule \textbf{Parameter} & \textbf{Value} & \textbf{Note} \\ \midrule Parallel environments & 4096 & Isaac Lab \\ Total iterations      & 30\,000 & \\ Steps per env per iter & 24 & \\ Mini-batches          & 4 & \\ PPO epochs per iter   & 5 & \\ Learning rate         & $3\times 10^{-4}$ & Adam \\ Discount $\gamma$     & 0.99 & \\ GAE $\lambda$         & 0.95 & \\ Clip range $\epsilon$ & 0.2 & \\ KL target             & $6\times 10^{-3}$ & \\ Entropy coeff.        & $0.01$ & \\ Value loss coeff.     & $1.0$ & \\ Max grad norm         & 1.0 & \\ Control frequency     & \SI{50}{Hz} & \\ History length $H$    & 4 & \\ Max ball count $N$    & 5 & \\ \bottomrule
  \end{tabular}
\end{table}

\begin{table}[h] \centering \small \caption{DAgger distillation hyperparameters (\TactSet).} \label{tab:supp_distill} \begin{tabular}{ll} \toprule \textbf{Parameter} & \textbf{Value} \\ \midrule Rounds          & 8 \\ BC steps (round 0)       & 400\,000 \\ DAgger steps (rounds 1--7) & 200\,000 each \\ Batch size (training)    & 20\,000 \\ Teacher inference batch  & 2\,048 \\ Learning rate            & $5\times 10^{-4}$ \\ CNN kernel               & $3\times 3$, 64 channels \\ GRU hidden size          & 512 \\ Tactile input shape      & $(H, 16, 16)$ \\ CNN embedding dim        & 64 \\ \bottomrule
  \end{tabular}
\end{table}

\section{Environment Specification}
\label{sec:env_spec}

\subsection{Reward Function}
\label{sec:rewards}

The total reward $r_t$ is the sum of object-balancing terms, locomotion terms, and a penalty term:
\begin{equation}
  r_t = r_{\mathrm{obj}} + r_{\mathrm{loco}} + r_{\mathrm{penalty}}.
\end{equation}

\paragraph{Object-balancing rewards.} Let $\mathcal{A}_t \subseteq \{1,\dots,N\}$ be the set of active ball indices at time $t$.  For each ball $i \in \mathcal{A}_t$, define the \emph{support margin} $\mu_i \in [0,1]$ as the minimum normalized distance from ball $i$'s projected position to the plate boundary (1 = plate center, 0 = edge):
\begin{align}
  r_{\mathrm{margin}} &= +3.0\;\cdot\; \frac{1}{|\mathcal{A}_t|}
    \sum_{i\in\mathcal{A}_t} \mu_i, \\
  r_{\mathrm{tail}} &= +1.25\;\cdot\; \mathrm{CVaR}_{0.4}\!\bigl(\{\mu_i\}_{i\in\mathcal{A}_t}\bigr),
\end{align}
where $\mathrm{CVaR}_{0.4}$~\citep{rockafellar2000cvar} is the conditional value-at-risk at the $0.4$ tail, i.e.\ the conditional mean of the lowest 40\% of margins (a tail-risk reward term incentivising stabilisation of the most precarious ball).
\begin{align}
  r_{\mathrm{edge}} &= -15.0\;\cdot\; \frac{1}{|\mathcal{A}_t|}
    \sum_{i\in\mathcal{A}_t} \mathbf{1}[\mu_i < 0], \\
  r_{\mathrm{spacing}} &= +1.5\;\cdot\;\frac{2}{|\mathcal{A}_t|(|\mathcal{A}_t|-1)}
    \sum_{i<j} \min\!\bigl(\|p_i - p_j\|_2,\, d_{\mathrm{max}}\bigr), \\
  r_{\mathrm{spacing,tail}} &= +0.75\;\cdot\;
    \mathrm{CVaR}_{0.4}\!\bigl(\{\|p_i-p_j\|_2\}_{i<j}\bigr), \\
  r_{\mathrm{dangerous}} &= -20.0\;\cdot\;
    \mathbf{1}\!\bigl[\min_{i\in\mathcal{A}_t}\mu_i < \mu_{\mathrm{thresh}}\bigr],
\end{align}
where $p_i \in \RR^2$ is the ball's projected position on the plate, $d_{\mathrm{max}}$ is a clipping radius, and $\mu_{\mathrm{thresh}}$ is the dangerous-state threshold (set per curriculum level).

\paragraph{Velocity-related penalties.}
\begin{align}
  r_{\mathrm{vel},xy} &= -0.05\;\cdot\; \frac{1}{|\mathcal{A}_t|}
    \sum_{i\in\mathcal{A}_t} \|v_i^{xy}\|_2^2, \\
  r_{\mathrm{vel},z}  &= -0.25\;\cdot\; \frac{1}{|\mathcal{A}_t|}
    \sum_{i\in\mathcal{A}_t} |v_i^{z}|,
\end{align}
where $v_i^{xy}$ and $v_i^z$ are the ball's horizontal and vertical velocity in the robot frame.

\paragraph{Locomotion rewards (from base locomotion policy).}
\begin{align}
  r_{\mathrm{track},v}  &= +1.5\;\cdot\;
    \exp\!\bigl(-\|\hat{v}^{xy} - v^{xy}_{\mathrm{cmd}}\|_2^2 / 0.25\bigr), \\
  r_{\mathrm{track},\omega} &= +1.25\;\cdot\;
    \exp\!\bigl(-|\hat{\omega}_z - \omega_{z,\mathrm{cmd}}|^2 / 0.25\bigr), \\
  r_{\mathrm{height}} &= -1.0\;\cdot\; (h - h_{\mathrm{ref}})^2, \\
  r_{\mathrm{roll,pitch}} &= -1.25\;\cdot\; \|\theta_{\mathrm{rp}}\|_2^2,
\end{align}
where $\hat{v}^{xy}$ and $\hat{\omega}_z$ are the robot's base velocity, $v^{xy}_{\mathrm{cmd}}$ and $\omega_{z,\mathrm{cmd}}$ are the commanded velocities, $h$ is the base height, $h_{\mathrm{ref}}$ is the target height, and $\theta_{\mathrm{rp}}$ are the roll and pitch angles.

\subsection{Observation Space}
\label{sec:obs}

The teacher policy receives the following observations at each timestep:

\paragraph{Proprioception (history length $H_{\mathrm{prop}}=6$, distinct from the object-history length $H=4$).} \begin{itemize}[topsep=2pt,itemsep=0pt,leftmargin=*]
\item Base angular velocity $\omega_b \in \RR^3$
\item Projected gravity vector $g_b \in \RR^3$
\item Velocity command $[v_x, v_y, \omega_z] \in \RR^3$
\item Joint positions $q \in \RR^{12}$ (relative to default)
\item Joint velocities $\dot{q} \in \RR^{12}$
\item Previous action $a_{t-1} \in \RR^{12}$
\item Robot base height $h \in \RR^1$
\end{itemize}
Total proprioception dimension: $43 \times H_{\mathrm{prop}} = 258$.

\paragraph{Multi-ball state (object-history length $H=4$, per-ball feature dim $d=14$).} For each active ball $i \in \mathcal{A}_t$: \begin{itemize}[topsep=2pt,itemsep=0pt,leftmargin=*]
\item Relative position $\Delta p_i \in \RR^3$ (ball position minus robot base, in robot frame; scale $1.0$)
\item Relative linear velocity $\Delta v_i \in \RR^3$ (scale $0.5$)
\item Relative orientation as quaternion $q_i \in \RR^4$ (scale $1.0$)
\item Relative angular velocity $\omega_i \in \RR^3$ (scale $0.25$)
\item Activity flag $a_i \in \{0,1\}$ (scale $1.0$)
\end{itemize}
Inactive slots are zero-masked.  Observation noise (uniform): position $\pm 0.01\,\text{m}$, velocity $\pm 0.2\,\text{m/s}$, orientation $\pm 0.05$ (Euler-angle equivalent), angular velocity $\pm 0.2\,\text{rad/s}$. Ball state is observable only when the ball is in contact with the robot surface (contact detection threshold $\approx 10^{-8}\,\text{N}$); otherwise the observation is zero.

The critic (teacher) additionally receives the denoised version of the above (no observation noise added).

\subsection{Termination Conditions}
\label{sec:terminations}

An episode terminates (non-timeout) when any of the following occurs: \begin{enumerate}[topsep=2pt,itemsep=0pt,leftmargin=*]
\item \textbf{Ball falls below robot base}: any active ball's $z$-position drops below the robot base $z$-position, $\exists i \in \mathcal{A}_t: p_{i,z} < p_{\mathrm{robot},z}$.
\item \textbf{Robot falls}: the projected gravity vector indicates a roll angle $|\arcsin(g_{b,y})| > 90^{\circ}$.
\item \textbf{Base height too low}: the robot base height falls below the minimum standing height threshold.
\end{enumerate}
Note: the \texttt{base\_contact} termination (body contact with the ground) is disabled for the multi-sphere task to prevent false positives from the tactile plate surface.

\section{Curriculum Design}
\label{sec:curriculum}

Curriculum learning~\citep{bengio2009curriculum,narvekar2020curriculum} progresses the agent from easy to hard task instances. In our setting the difficulty axis is the active ball count $k$.

\subsection{Promotion Criteria}

The multi-ball curriculum promotes from $k$ to $k+1$ active balls when a rolling window of recent episodes satisfies \emph{all six} of the following conditions simultaneously: \begin{enumerate}[topsep=2pt,itemsep=0pt,leftmargin=*]
\item \textbf{Episode-length ratio} $\ge 0.85$: the mean episode length divided by the maximum episode length exceeds 85\%, indicating the agent consistently survives to episode end.
\item \textbf{Support margin} $\ge \mu^*_k$: mean minimum support margin across active balls exceeds the level threshold.
\item \textbf{Dangerous fraction} $\le f^*_{d,k}$: the fraction of timesteps with any ball in a dangerous state (near edge) is below the level threshold.
\item \textbf{Edge-overflow fraction} $\le f^*_{e,k}$: the fraction of timesteps with any ball outside the plate boundary is below the level threshold.
\item \textbf{Linear velocity tracking error} $\le \epsilon^*_{v,k}$: mean $\|\hat{v}^{xy} - v^{xy}_{\mathrm{cmd}}\|_2$ is below threshold.
\item \textbf{Angular velocity tracking error} $\le \epsilon^*_{\omega,k}$: mean $|\hat{\omega}_z - \omega_{z,\mathrm{cmd}}|$ is below threshold.
\end{enumerate}

All conditions must hold simultaneously for \texttt{required\_successes}$\,=\,1$ consecutive evaluation windows, with \texttt{min\_stage\_episodes}$\,=\,2048$ and \texttt{min\_steps\_per\_level}$\,=\,3200$.

\subsection{Per-Level Promotion Thresholds}

\begin{table}[h] \centering \small \caption{Per-level curriculum promotion thresholds ($k \to k+1$). ``Support margin'' $\mu^*$ is the normalized minimum distance from plate boundary. ``Dangerous frac.'' $f^*_d$ and ``Edge frac.'' $f^*_e$ are allowed fractions of unsafe timesteps. ``Vel.\ err.'' $\epsilon^*_v$ and ``Yaw err.'' $\epsilon^*_\omega$ are maximum allowed tracking errors.} \label{tab:curriculum_thresholds} \begin{tabular}{cccccc} \toprule \textbf{Level ($k\to k+1$)} & $\mu^*$ & $f^*_d$ & $f^*_e$ & $\epsilon^*_v$ & $\epsilon^*_\omega$ \\ \midrule $1 \to 2$ & 0.30 & 0.20 & 0.02 & 0.30 & 0.40 \\ $2 \to 3$ & 0.25 & 0.25 & 0.03 & 0.35 & 0.45 \\ $3 \to 4$ & 0.20 & 0.30 & 0.04 & 0.40 & 0.50 \\ $4 \to 5$ & 0.15 & 0.35 & 0.05 & 0.45 & 0.55 \\ \bottomrule
  \end{tabular}
\end{table}

The thresholds are relaxed progressively at higher levels to account for the increased difficulty of balancing more balls simultaneously.

\begin{figure}[h] \centering \includegraphics[width=0.85\linewidth]{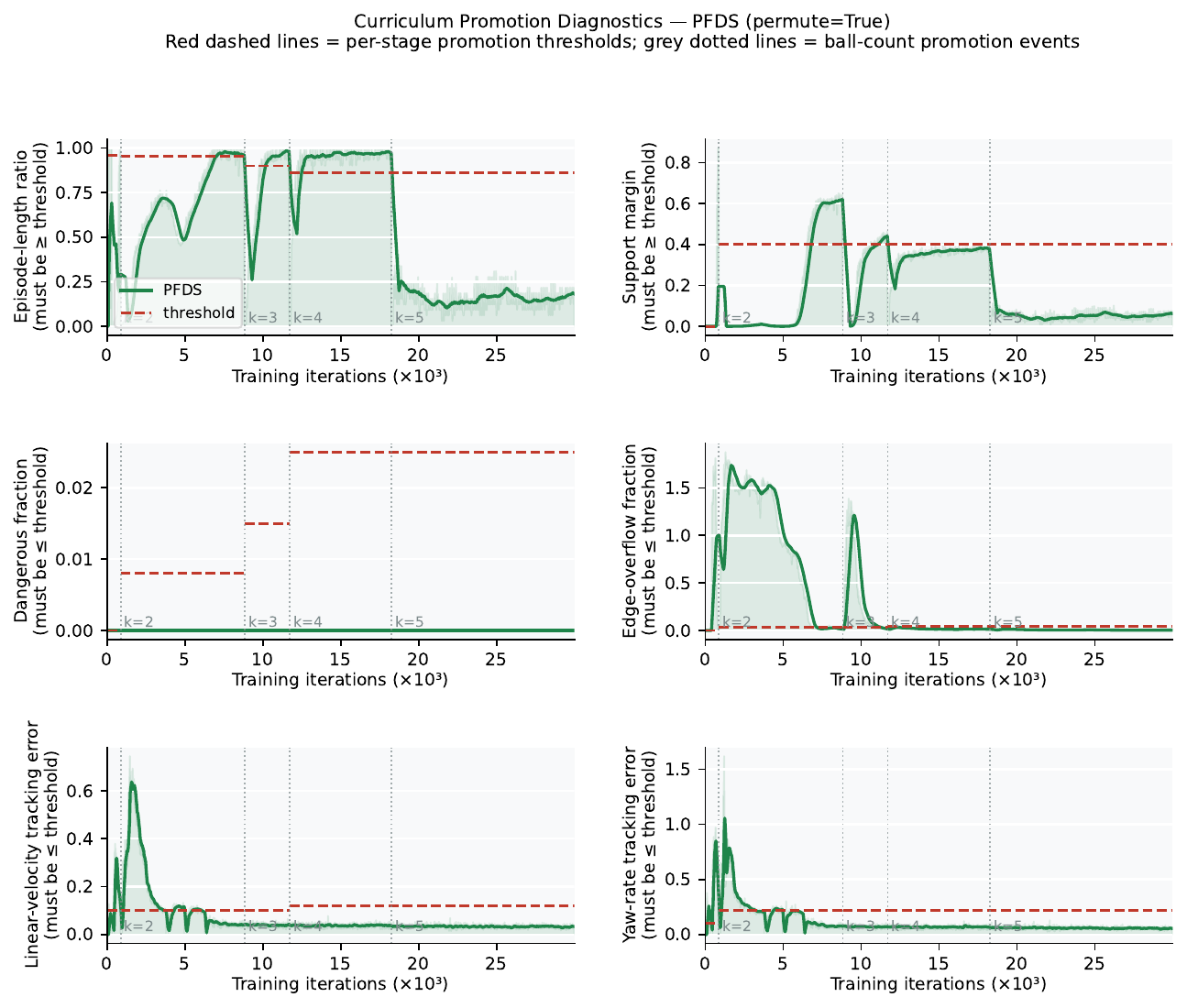} \caption{Curriculum promotion-criterion traces for a representative \PFDS training run. All six promotion conditions are shown: episode-length ratio, support margin, dangerous-state fraction, edge-overflow fraction, linear-velocity tracking error, and yaw-rate tracking error. Red dashed lines are per-stage promotion thresholds; grey dotted lines mark the iterations at which the curriculum advances to the next level. All six conditions must simultaneously meet threshold for promotion.} \label{fig:curriculum_conditions}
\end{figure}

% ============================================================================
\section{Discussion}
\label{sec:discussion}
% ============================================================================

\paragraph{Architectural vs.\ augmentation guarantees.} In our runs the failure of \DSHCOFF behaves as a budget-limited barrier rather than a convergence delay: slot identity becomes a deterministic proxy for curriculum stage and blocks promotion beyond $k{=}2$ for the full 30\,000-iteration budget. \PFDS avoids this by construction; without augmentation it advances to $k{=}5$ but is slower to reach $k{\ge}3$ ($\sim$16.9k vs.\ $\sim$3.9k iterations, \cref{tab:perm_ablation}). Whether longer training or different seeds would eventually unblock \DSHCOFF is left to future work.

\paragraph{Curriculum-dynamics mechanism of \DSHC failure.} Beyond the symmetry-group argument, the failure has a concrete curriculum-dynamics signature. When slot-permutation augmentation is disabled, slot index $i$ is consistently occupied by ball $i$ across an episode; because the curriculum activates balls sequentially, the slot index becomes a deterministic proxy for the current curriculum stage. The policy keys on slot identity rather than physical state, and fails to satisfy the \emph{conjunction} of promotion criteria at $k{=}2$: the episode-length ratio remains high (the agent does not fail outright) but the support margin never reaches the promotion threshold. \PFDS is immune by construction, since its per-frame pooling discards slot-identity information. \Cref{fig:diagnostic} reports the resulting traces in the runs we observed.

\begin{figure}[h] \centering \includegraphics[width=0.85\linewidth]{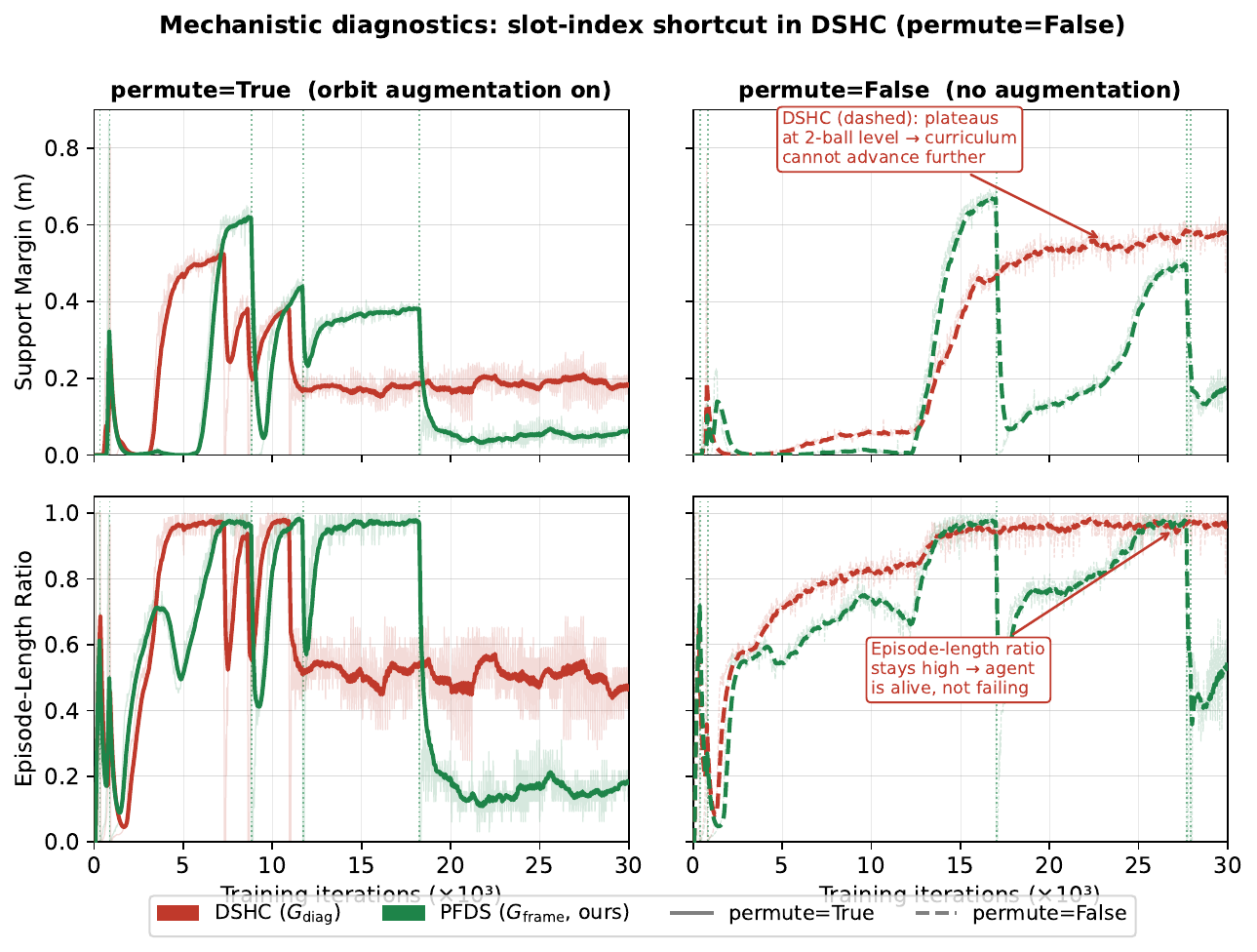} \caption{Curriculum-progress diagnostic for the slot-index shortcut. Rows show two promotion-related signals (top: support margin; bottom: episode-length ratio). Columns compare slot-permutation augmentation on (left) and off (right). \DSHC (dashed) and \PFDS (solid) are overlaid. Under augmentation-off, \DSHCOFF keeps the episode-length ratio high but cannot push the support margin to its promotion threshold, so the curriculum stalls at $k{=}2$; \PFDSOFF reaches both thresholds and advances. Under augmentation-on, both encoders meet the conjunction of promotion criteria.} \label{fig:diagnostic}
\end{figure}

\paragraph{Sample-efficiency cost of disabling augmentation.} Without slot-permutation augmentation, the training distribution samples only the identity element of $\Gframe$ at each step, reducing effective distributional diversity. Since \PFDS is $\Gframe$-invariant by construction, this does not prevent convergence but does increase the number of iterations required to cover the state space (\cref{tab:perm_ablation}: $\sim$16.9k vs.\ $\sim$3.9k iterations to $k{\ge}3$).

\end{document}